\newtheorem{theorem}{Theorem}
\newtheorem{lemma}{Lemma}
\title{Pruning of Deep Spiking Neural Networks through Gradient Rewiring}
\author{
	Yanqi Chen$^{1,3}$
	\and
	Zhaofei Yu$^{1,2,3 *}$\and
	Wei Fang$^{1,3}$\and
	Tiejun Huang$^{1,2,3}$\And
	Yonghong Tian$^{1,3}$\thanks{Corresponding author}
	\affiliations
	$^1$Department of Computer Science and Technology, Peking University\\
	$^2$Institute for Artificial Intelligence, Peking University\\
	$^3$Peng Cheng Laboratory
	\emails
	\{chyq, yuzf12, fwei, tjhuang, yhtian\}@pku.edu.cn
}
\begin{document}
	
	\maketitle
	
	\begin{abstract}
		Spiking Neural Networks (SNNs) have been attached great importance due to their biological plausibility and high energy-efficiency on neuromorphic chips. As these chips are usually resource-constrained, the compression of SNNs is thus crucial along the road of practical use of SNNs. Most existing methods directly apply pruning approaches in artificial neural networks (ANNs) to SNNs, which ignore the difference between ANNs and SNNs, thus limiting the performance of the pruned SNNs. Besides, these methods are only suitable for shallow SNNs. In this paper, inspired by synaptogenesis and synapse elimination in the neural system, we propose gradient rewiring (Grad R), a joint learning algorithm of connectivity and weight for SNNs, that enables us to seamlessly optimize network structure without retraining. Our key innovation is to redefine the gradient to a new synaptic parameter, allowing better exploration of network structures by taking full advantage of the competition between pruning and regrowth of connections. The experimental results show that the proposed method achieves minimal loss of SNNs' performance on MNIST and CIFAR-10 datasets so far. Moreover, it reaches a $\sim$3.5\% accuracy loss under unprecedented 0.73\% connectivity, which reveals remarkable structure refining capability in SNNs. Our work suggests that there exists extremely high redundancy in deep SNNs. Our codes are available at \url{https://github.com/Yanqi-Chen/Gradient-Rewiring}. 
	\end{abstract}
	
	\section{Introduction}
	Spiking Neural Networks (SNNs), as the third generation of Artificial Neural Networks (ANN) \cite{maass1997networks}, have increasingly aroused researchers’ great interest in recent years due to their high biological plausibility, temporal information processing capability, and low power consumption on neuromorphic chips \cite{gerstner2002spiking,roy2019towards}.
	As these chips are usually resource-constrained, 
	the compression of SNNs is thus of great importance to lower computation cost to make the deployment on neuromorphic chips more feasible, and realize the practical application of SNNs.  
	
	Researchers have made a lot of efforts on the pruning of SNNs.
	Generally, most methods learn from the existing compression techniques in the field of ANNs and directly apply them to SNNs, such as directly truncating weak weights below the threshold \cite{10.3389/fnins.2016.00241,8325325,8894853} and using Lottery Ticket Hypothesis \cite{frankle2018the} in SNNs \cite{9053412}. Some works use other metrics, such as levels of weight development \cite{10.3389/fnins.2019.00405} and similarity between spike trains \cite{9023707}, as guidance for removing insignificant or redundant weights. Notwithstanding some spike-based metrics for pruning are introduced in these works, the results on shallow SNNs are not persuasive enough.
	
	There are some attempts to mimic the way the human brain learns to enhance the pruning of SNNs. Qi \textit{et al.} \shortcite{ijcai2018-221} enabled structure learning of SNNs by adding connection gates for each synapse. They adopted the biological plausible local learning rules, Spike-Timing-Dependent Plasticity (STDP), to learn gated connection while using Tempotron rules \cite{gutig2006the} for regular weight optimization. However, the method only applies to the SNN with a two-layer structure. Deng \textit{et al.} \shortcite{deng2019comprehensive} exploited the ADMM optimization tool to prune the connections and optimize network weights alternately. Kappel \textit{et al.} \shortcite{10.1371/journal.pcbi.1004485} proposed synaptic sampling to optimize network construction and weights by modeling the spine motility of spiking neurons as Bayesian learning. Bellec \textit{et al.} went a further step and proposed Deep Rewiring (Deep R) as a pruning algorithm for ANNs \shortcite{bellec2018deep} and Long Short-term Memory Spiking Neural Networks (LSNNs) \shortcite{NIPS2018_7359}, which was then deployed on SpiNNaker 2 prototype chips \cite{10.3389/fnins.2018.00840}.
	Deep R is an on-the-fly pruning algorithm, which indicates its ability to reach target connectivity without retraining.
	Despite these methods attempting to combine learning and pruning in a unified framework, they focus more on pruning, rather than the rewiring process. Actually, the pruned connection is hard to regrow, limiting the diversity of network structures and thus the performance of the pruned SNNs.
	
	In this paper, inspired by synaptogenesis and synapse elimination in the neural system \cite{huttenlocher1994synaptogenesis,petzoldt2014synaptogenesis}, we propose gradient rewiring (Grad R), a joint learning algorithm of connectivity and weight for SNNs, by introducing a new synaptic parameter to unify the representation of synaptic connectivity and synaptic weights.
	By redefining the gradient to the synaptic parameter, Grad R takes full advantage of the competition between pruning and regrowth of connections to explore the optimal network structure. To show Grad R can optimize synaptic parameters of SNNs, we theoretically prove that the angle between the accurate gradient and redefined gradient is less than $90^{\circ}$.
	We further derive the learning rule and realize the pruning of deep SNNs. We evaluate our algorithm on the MNIST and CIFAR-10 dataset and achieves state-of-the-art performance. On CIFAR-10, Grad R reaches a classification accuracy of 92.03\% (merely \textless1\% accuracy loss) when constrained to $\sim$5\% connectivity. Further, it achieves a classification accuracy of 89.32\% ($\sim$3.5\% accuracy loss) under unprecedented 0.73\% connectivity, which reveals remarkable structure refining capability in SNNs. Our work suggests that there exists extremely high redundancy in deep SNNs. 
	
	\section{Methods}
	In this section, we present the pruning algorithm for SNNs in detail. We first introduce the spiking neural model and its discrete computation mechanism. Then we propose the network structure of the SNNs used in this work. Finally, we present the gradient rewiring framework to learn the connectivity and weights of SNNs jointly and derive the detailed learning rule.
	
	\subsection{Spiking Neural Model}
	The spiking neuron is the fundamental computing unit of SNNs. 
	Here we consider the Leaky
	Integrate-and-Fire (LIF) model \cite{gerstner2002spiking}, which is one of the simplest spiking neuron models widely used in SNNs. The membrane potential $u(t)$ of the LIF neuron is defined as:
	\begin{equation}
		\label{eq:neuron}
		\begin{split}
			\tau_m \frac{\mathrm{d}u(t)}{\mathrm{d}t}=-(u(t)-u_{\text{rest}})+\sum\nolimits_i w_iI_i(t),\\
			\lim\limits_{\Delta t\to 0^+}u(t^f+\Delta t)=u_{\text{rest}},\text{ if } u(t^f)=u_{\text{th}},
		\end{split}
	\end{equation}
	where $I_i(t)$ denotes the input current from the $i$-th presynaptic neuron at time $t$, $w_i$ denotes the corresponding \textit{synaptic weight} (connection weight). $\tau_m$ is the membrane time constant, and $u_{\text{rest}}$ is the resting potential. When the membrane potential exceeds the firing threshold $u_{\text{th}}$ at time $t^f$, a spike is generated and transmitted to postsynaptic neurons. At the same time, the membrane potential $u(t) $ of the LIF neuron goes back to the reset value $u_{\text{rest}} < u_{\text{th}}$ (see Eq.~\eqref{eq:neuron}).
	
	For numerical simulations of LIF neurons in SNNs, we need to consider a version of the parameters dynamics that is discrete in time. Specifically, Eq.~\eqref{eq:neuron} is approximated by: 
	\begin{equation}\label{eq:neuron-discrete}
		\begin{split}
			m_{t+1}&= u_t+\frac{1}{\tau_m}\left( -(u_t-u_{\text{rest}})+\sum\nolimits_i w_iI_{i,t}\right),\\
			S_{t+1} &= H(m_{t+1}-u_{\text{th}}),\\
			u_{t+1}&= S_{t+1}u_{\text{rest}}+(1-S_{t+1})m_{t+1}.
		\end{split}
	\end{equation}
	Here $m_t$ and $u_t$ denote the membrane
	potential after neural dynamics and after the trigger
	of a spike at time-step $t$, respectively.
	$S_t$ is the binary spike at time-step $t$, which equals 1 if there is a spike
	and 0 otherwise. $H(\cdot)$ is the Heaviside step function, which is defined by
	$H(x)=
	\begin{cases}
		1 & x\geq 0\\
		0 & x<0
	\end{cases}$.
	Note that $H(0)=1$ is specially defined.
	
	\begin{figure}[t]
		\begin{center}
			\includegraphics[width=0.7 \linewidth]{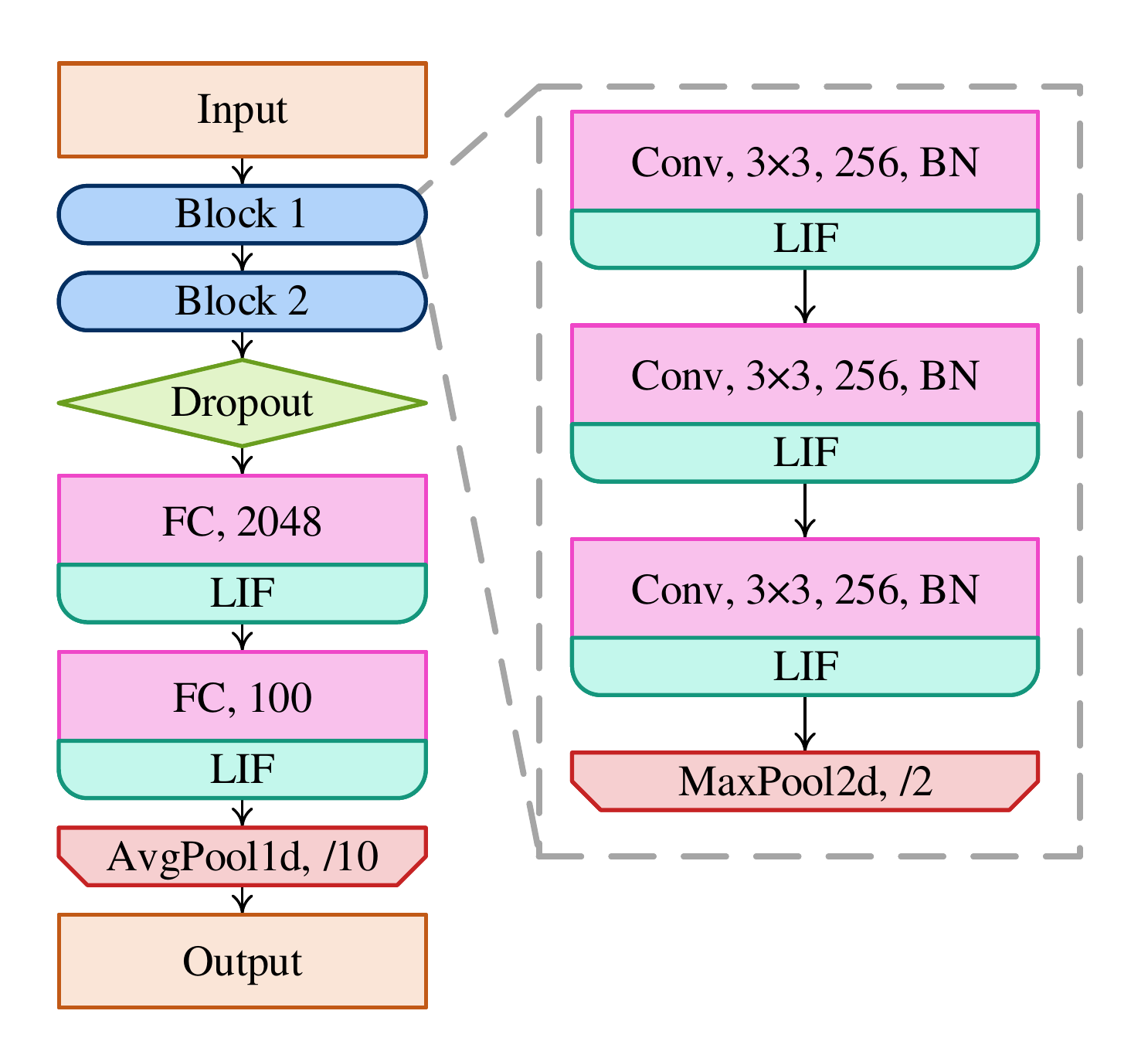}
		\end{center}
		\caption{The architecture of the SNN consisted of LIF neurons.}
		\label{fig:c10-arch}
	\end{figure}

	\subsection{Deep Spiking Neural Network}
	A typical architecture of deep SNNs is exemplified as Fig.~\ref{fig:c10-arch}, which consists of neuronal layers (LIF layers) generating spikes and synaptic layers including convolutional layers, pooling layers, and fully-connected (FC) layers. The synaptic layers define connective patterns of synapses between spiking neurons, which are leveraged for spatial correlative feature extraction or classification. A deep SNN such as CIFARNet proposed in \cite{Wu_Deng_Li_Zhu_Xie_Shi_2019} usually has multiple large-scale synaptic layers and requires pruning for reducing computation cost.

	\subsection{Gradient Rewiring Framework}
	Inspired by synaptogenesis and synapse elimination of the brain, we present a gradient-based rewiring framework for deep SNNs pruning and call it \textit{Gradient Rewiring}. Rewiring denotes the structural change of the brain’s wiring diagram. Gradient Rewiring combines pruning and learning to explore both connective patterns and weight spaces. The combination of pruning and learning requires continuous parameterized expression of wiring change (synaptic connection change) and weight optimization. 
	Previous neuroscience research has revealed that the wiring plasticity of neurons can be viewed as a special case of weight plasticity \cite{chklovskii2004cortical}.
	Thus we can unify the representation of synaptic connectivity and synaptic weights by introducing a new hidden synaptic parameter $\theta$ in addition to the synaptic weight $w$.
	
	The synaptic parameter $\theta$ is related to the volume of a dendritic spine, which defines the synaptic weights and connectivity for each synapse. A synapse is considered connected if $\theta>0$ and disconnected otherwise.
	Similar to \cite{bellec2018deep}, the synaptic parameter $\theta$ is mapped to synaptic weight through the relation $w=s \mathrm{ReLU}(\theta)$, where $s\in\{+1,-1\}$ is \textit{fixed} and used to distinguishes excitatory and inhibitory synapses. In this way, the sign change of synaptic parameter $\theta$ can describe both formation and elimination of synaptic connection.
	
	\subsubsection{Optimizing Synaptic Parameters \texorpdfstring{$\bm{\theta}$}{θ}}
	With the above definition, the gradient descent algorithm and its variants can be directly applied to optimize all synaptic parameters $\bm{\theta}$ of the SNNs. Specifically, we assume that the loss function has the form $\mathcal{L}(x_i,y_i,\bm{w})$, where $x_i,y_i$ denotes the $i$-th input sample and the corresponding target output, and $\bm{w}$ is the vector of all synaptic weights. The corresponding vectors of all synaptic parameters and signs are $\bm{\theta}, \bm{s}$ respectively. Suppose $w_k,\theta_k,s_k$ stand for the $k$-th element of $\bm{w}, \bm{\theta}, \bm{s}$. Then we have $\bm{w}=\bm{s}\odot \mathrm{ReLU}(\bm{\theta})$, where $\odot$ is the Hadamard product, and ReLU is an element-wise function. The gradient descent algorithm would adjust all the synaptic parameters to lower the loss function $\mathcal{L}$, that is: 
	\begin{align}\label{eq:grad-origin}
		\Delta \theta_k \propto \frac{\partial\mathcal{L}}{\partial \theta_k}=\frac{\partial \mathcal{L}}{\partial w_k}\frac{\partial w_k}{\partial \theta_k}=s_k\frac{\partial \mathcal{L}}{\partial w_k} H(\theta_k),
	\end{align}
	where $H(\cdot)$ represents the Heaviside step function. Ideally, if $\theta_k<0$ during learning, the corresponding synaptic connection will eliminate. It will regrow when $\theta_k>0$. 
	Nonetheless, the pruned connection will never regrow. 
	To see this, supposing that $\theta_k=\theta<0$, we get $H(
	\theta_k)=0$, and ${w_k=s_k\mathrm{ReLU}(\theta_k)=0}$. Then:
	\begin{align}\label{eq:grad-neg}
		\frac{\partial\mathcal{L}}{\partial \theta_k}\biggr|_{\theta_k=\theta}=H(\theta)s_k\frac{\partial \mathcal{L}}{\partial w_k}\biggr|_{w_k=0}=0 ,~~\text{for } \theta<0.
	\end{align}
	As the gradient of the loss function $\mathcal{L}$ with respect to $\theta_k$ is 0, $\theta_k$ will keep less than 0, and the corresponding synaptic connection will remain in the state of elimination.
	This might result in a disastrous loss of expression capabilities for SNNs. So far, the current theories and analyses can also systematically explain the results of deep rewiring \cite{bellec2018deep}, where the pruned connection is hard to regrow even noise is added to the gradient. There is no preference for minimizing loss when adding noise to gradient.

	\subsubsection{Gradient Rewiring}
	Our new algorithm adjusts $\bm{\theta}$ in the same way as gradient descent, but for each $\theta_k$, it uses a new formula to redefine the gradient when $\theta_k<0$.
	\begin{align}\label{eq:new-grad}
		(\Delta \theta_k)_{\text{GradR}} \propto	\frac{\partial\bar{\mathcal{L}}}{\partial \theta_k} := s_k\frac{\partial \mathcal{L}}{\partial w_k}\biggr|_{w_k=s_k \text{ReLU}(\theta_k)},~~\text{for } \theta_k \in\mathbb{R}.
	\end{align}
	For an active connection $\theta_k=\theta>0$, the definition makes no difference as
	$H(\theta_k)=1$. For an inactive connection $\theta_k=\theta<0$, the gradients are different as $\Delta \theta_k=0$ and $(\Delta \theta_k)_{\text{GradR}} \propto s_k\frac{\partial \mathcal{L}}{\partial w_k}\bigr|_{w_k=0}$.
	The change of gradient for inactive connections results in a different loss function $\bar{\mathcal{L}}$. As illustrated in Fig.~\ref{fig:grad:a}, when $s_k\frac{\partial \mathcal{L}}{\partial w_k}\bigr|_{w_k=0}<0$, the gradient of $\bar{\mathcal{L}}$ guides a way to jump out the plateau in $\mathcal{L}$ (red arrow). With the appropriate step size chosen, a depleted connection will regrow without increasing the real loss $\mathcal{L}$ on this occasion. It should be noted that the situation in Fig.~\ref{fig:grad:b} can also occur and push the parameter $\theta_k$ even more negative, which can be overcome by adding a constraint scope to the parameter $\bm{\theta}$ (discussed in the subsequent part).
	
	Similar to the gradient descent algorithm (Eq.~\eqref{eq:grad-origin}),
	Grad R can also optimize synaptic parameters $\bm{\theta}$ of the SNNs. An intuitive explanation is the angle between the accurate
	gradient $\nabla_{\bm{\theta}}\mathcal{L}$ and the redefined gradient $\nabla_{\bm{\theta}}\bar{\mathcal{L}}$ is below 90\textdegree, i.e., $\langle \nabla_{\bm{\theta}}\mathcal{L}, \nabla_{\bm{\theta}}\bar{\mathcal{L}} \rangle \geq 0$,
	which can be derived from Lemma \ref{lemma:1}.  
	Further, a rigorous proof is illustrated in Theorem \ref{theorem:1}.
	
	\begin{figure}[t]
		\begin{center}
			\subfigure[$s\frac{\partial \mathcal{L}}{\partial w}\Bigr|_{w=0}<0$]{
				\label{fig:grad:a} 
				\includegraphics[width=0.45\linewidth]{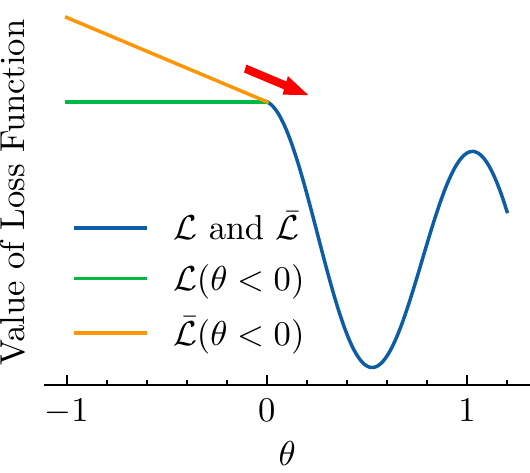}
			}
			\subfigure[$s\frac{\partial \mathcal{L}}{\partial w}\Bigr|_{w=0}>0$]{
				\label{fig:grad:b} 
				\includegraphics[width=0.45\linewidth]{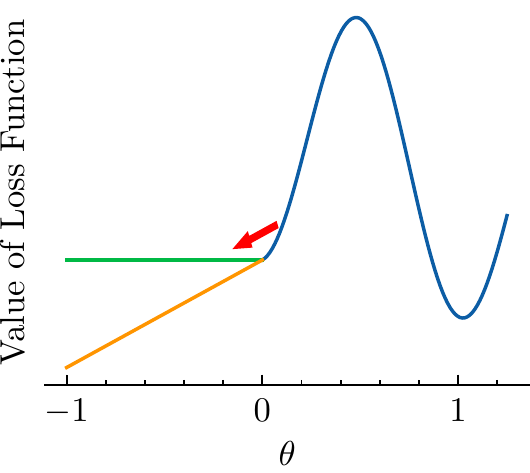}
			}
		\end{center}
		\caption{The comparison of gradient descent on $\bar{\mathcal{L}}$ and $\mathcal{L}$. The \textbf{red} arrows specify the direction of gradient descent on $\bar{\mathcal{L}}$ for $\theta$.}
		\label{fig:grad}
	\end{figure}
	
	\begin{theorem}[Convergence]
		\label{theorem:1}
		For a spiking neural network with synaptic parameters $\bm{\theta}$, synaptic weights $\bm{w}$, signs $\bm{s}$, and the SoftPlus mapping, a smooth approximation to the ReLU, $w_k=\frac{s_k}{\beta}\log(1+\exp(\beta \theta_k)),\beta\gg 1$, where $w_k,\theta_k,s_k$ stand for the $k$-th element of $\bm{w}, \bm{\theta}, \bm{s}$, the loss function $\mathcal{L}(\bm{\theta})$ with lowerbound, if there exists a constant $C$ such that $\forall k, \theta_k\geq C$ and a Lipschitz constant $L$ such that $\|\nabla_{\bm{\theta}}\mathcal{L}(\bm{\theta}_1)-\nabla_{\bm{\theta}}\mathcal{L}(\bm{\theta}_2)\|\leq L\|\bm{\theta}_1-\bm{\theta}_2\|$ for any differentiable points $\bm{\theta}_1$ and $\bm{\theta}_2$, then $\mathcal{L}$ will decrease until convergence using gradient $\nabla_{\bm{\theta}}\bar{\mathcal{L}}$, if step size $\eta>0$ is small enough.
	\end{theorem}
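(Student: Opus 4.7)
The plan is to invoke the standard descent lemma for $L$-smooth functions and combine it with an element-wise lower bound on the inner product $\langle \nabla_{\bm\theta}\mathcal{L}, \nabla_{\bm\theta}\bar{\mathcal{L}}\rangle$ that is ensured by the constraint $\theta_k \geq C$. First, I would write out the two gradients explicitly under the SoftPlus mapping: since $\partial w_k/\partial \theta_k = s_k\,\sigma(\beta\theta_k)$ with $\sigma$ the logistic sigmoid, the exact gradient satisfies $\partial\mathcal{L}/\partial\theta_k = s_k\,\sigma(\beta\theta_k)\,\partial\mathcal{L}/\partial w_k$, while by the definition in Eq.~\eqref{eq:new-grad} the rewired gradient drops the sigmoid factor and reads $\partial\bar{\mathcal{L}}/\partial\theta_k = s_k\,\partial\mathcal{L}/\partial w_k$ evaluated at $w_k = \frac{s_k}{\beta}\log(1+\exp(\beta\theta_k))$.

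Next, the $L$-Lipschitz assumption on $\nabla_{\bm\theta}\mathcal{L}$ yields the usual descent inequality
\begin{equation*}
\mathcal{L}(\bm\theta_{t+1}) \leq \mathcal{L}(\bm\theta_t) - \eta\,\langle \nabla_{\bm\theta}\mathcal{L}(\bm\theta_t),\, \nabla_{\bm\theta}\bar{\mathcal{L}}(\bm\theta_t)\rangle + \tfrac{L\eta^2}{2}\,\|\nabla_{\bm\theta}\bar{\mathcal{L}}(\bm\theta_t)\|^2,
\end{equation*}
for the update $\bm\theta_{t+1} = \bm\theta_t - \eta\,\nabla_{\bm\theta}\bar{\mathcal{L}}(\bm\theta_t)$. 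Using $s_k^2 = 1$ gives $\|\nabla_{\bm\theta}\bar{\mathcal{L}}\|^2 = \sum_k (\partial\mathcal{L}/\partial w_k)^2$, while the inner product expands as $\sum_k \sigma(\beta\theta_k)(\partial\mathcal{L}/\partial w_k)^2$. This is where Lemma~\ref{lemma:1} slots in to establish nonnegativity termwise; the constraint $\theta_k \geq C$ then strengthens this to the uniform bound $\sigma(\beta\theta_k) \geq \sigma(\beta C) > 0$, so that $\langle \nabla_{\bm\theta}\mathcal{L}, \nabla_{\bm\theta}\bar{\mathcal{L}}\rangle \geq \sigma(\beta C)\,\|\nabla_{\bm\theta}\bar{\mathcal{L}}\|^2$.

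Substituting into the descent inequality yields
\begin{equation*}
\mathcal{L}(\bm\theta_{t+1}) - \mathcal{L}(\bm\theta_t) \leq -\eta\left(\sigma(\beta C) - \tfrac{L\eta}{2}\right)\|\nabla_{\bm\theta}\bar{\mathcal{L}}(\bm\theta_t)\|^2.
\end{equation*}
For any $\eta \in (0,\,2\sigma(\beta C)/L)$ the right-hand side is non-positive, so $\{\mathcal{L}(\bm\theta_t)\}$ is monotonically non-increasing; combined with the assumed lower bound on $\mathcal{L}$, the monotone convergence theorem delivers convergence. Telescoping the inequality also shows $\sum_t \|\nabla_{\bm\theta}\bar{\mathcal{L}}(\bm\theta_t)\|^2 < \infty$, hence $\|\nabla_{\bm\theta}\bar{\mathcal{L}}\| \to 0$.

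The main obstacle is the step where the sign of the inner product is turned into a quantitative lower bound. Termwise nonnegativity alone is not enough to guarantee monotone descent, because the sigmoid factor could in principle collapse to $0$ along directions where $\theta_k \to -\infty$, leaving the rewired gradient almost orthogonal to the true gradient and allowing the $L\eta^2/2$ term to dominate. The hypothesis $\theta_k \geq C$ is precisely what rules this out by producing the uniform factor $\sigma(\beta C)$; the rest of the argument is then a standard smooth-optimization exercise. I would note in the proof that this constraint is naturally enforced in the implementation by clipping (the "constraint scope" alluded to after Fig.~\ref{fig:grad}), so the hypothesis is not an artifact of the analysis.
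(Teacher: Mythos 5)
Your proposal is correct and follows essentially the same route as the paper: the descent lemma you invoke is exactly what the paper derives via the line-integral and Lipschitz bound, and your inequality $\langle \nabla_{\bm\theta}\mathcal{L}, \nabla_{\bm\theta}\bar{\mathcal{L}}\rangle \geq \sigma(\beta C)\|\nabla_{\bm\theta}\bar{\mathcal{L}}\|^2$ is precisely Lemma~\ref{lemma:1} with $A=1/\sigma(\beta C)$, yielding the same step-size condition $\eta<2/(AL)$. The only additions are the telescoping argument giving $\|\nabla_{\bm\theta}\bar{\mathcal{L}}\|\to 0$ and the explicit remark on why termwise nonnegativity alone would not suffice, both of which are sound but not needed for the stated claim.
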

	
	\begin{proof}
		Let $\bm{\theta}^t$ be the parameter $\bm{\theta}$ after $t$-th update steps using $\nabla_{\bm{\theta}}\bar{\mathcal{L}}$ as direction for each step, that is, $\bm{\theta}^{t+1}=\bm{\theta}^t-\eta \nabla_{\bm{\theta}}\bar{\mathcal{L}}(\bm{\theta}^t)$. With Lipschitz continuous gradient, we have
		\begin{equation}\label{eq:theorem}
			\begin{split}
				&\mathcal{L}(\bm{\theta}^{t+1})-\mathcal{L}(\bm{\theta}^{t})\\
				=&\int_{0}^{1}\langle\nabla_{\bm{\theta}}\mathcal{L}(\bm{\theta}^{t}(\tau)),\bm{\theta}^{t+1}-\bm{\theta}^{t}\rangle\mathrm{d}\tau\\
				=&\langle\nabla_{\bm{\theta}}\mathcal{L}(\bm{\theta}^{t}),\bm{\theta}^{t+1}-\bm{\theta}^{t}\rangle\\
				&+\int_{0}^{1}\langle\nabla_{\bm{\theta}}\mathcal{L}(\bm{\theta}^{t}(\tau))-\nabla_{\bm{\theta}}\mathcal{L}(\bm{\theta}^{t}),\bm{\theta}^{t+1}-\bm{\theta}^{t}\rangle\mathrm{d}\tau\\
				\leq&\langle\nabla_{\bm{\theta}}\mathcal{L}(\bm{\theta}^{t}),\bm{\theta}^{t+1}-\bm{\theta}^{t}\rangle\\
				&+\int_{0}^{1}\|\nabla_{\bm{\theta}}\mathcal{L}(\bm{\theta}^{t}(\tau))-\nabla_{\bm{\theta}}\mathcal{L}(\bm{\theta}^{t})\|\cdot\|\bm{\theta}^{t+1}-\bm{\theta}^{t}\|\mathrm{d}\tau\\
				\leq&\langle\nabla_{\bm{\theta}}\mathcal{L}(\bm{\theta}^t),\bm{\theta}^{t+1}-\bm{\theta}^t\rangle+L\|\bm{\theta}^{t+1}-\bm{\theta}^t\|^2\int_{0}^{1}\tau\mathrm{d}\tau\\
				=&-\eta\langle\nabla_{\bm{\theta}}\mathcal{L}(\bm{\theta}^t),\nabla_{\bm{\theta}}\bar{\mathcal{L}}(\bm{\theta}^t)\rangle+\frac{L\eta^2}{2}\|\nabla_{\bm{\theta}}\bar{\mathcal{L}}(\bm{\theta}^t)\|^2\\
				\leq&-\eta(1-AL\eta/2)\langle\nabla_{\bm{\theta}}\mathcal{L}(\bm{\theta}^t),\nabla_{\bm{\theta}}\bar{\mathcal{L}}(\bm{\theta}^t)\rangle,
			\end{split}
		\end{equation}
		where $\bm{\theta}^{t}(\tau)$ is defined by $\bm{\theta}^{t}(\tau)=\bm{\theta}^{t}+\tau(\bm{\theta}^{t+1}-\bm{\theta}^{t}),\tau\in[0,1]$. The first two inequality holds by Cauchy-Schwarz inequality and Lipschitz continuous gradient, respectively. The last inequality holds with constant $A>0$ according to Lemma \ref{lemma:1}.
		
		When $\|\nabla_{\bm{\theta}}\bar{\mathcal{L}}(\bm{\theta}^t)\|>0$ and $\eta<2/AL$, the last term in Eq.~\eqref{eq:theorem} is strictly negative. Hence, we have a strictly decreasing sequence $\{\mathcal{L}(\bm{\theta}^n)\}_{n\in\mathbb{N}}$. Since the loss function has lowerbound, this sequence must converges.
	\end{proof}	
	
	\subsubsection{Prior-based Connectivity Control}
	There is no explicit control of connectivity in the above algorithm. We need to explore speed control of pruning targeting diverse levels of sparsity. Accordingly, we introduce a prior distribution to impose constraints on the range of parameters. To reduce the number of extra hyperparameters, we adopt the same Laplacian distribution for each synaptic parameter $\theta_k$: 
	\begin{align}\label{eq:prior}
		p(\theta_k)=\frac{\alpha}{2}\exp(-\alpha|\theta_k-\mu|),
	\end{align}
	where $\alpha$ and $\mu$ are scale parameter and location parameter respectively. The corresponding penalty term is:
	\begin{align}\label{eq:penalty}
		\frac{\partial \log p(\theta_k)}{\partial\theta_k}=-\alpha\mathrm{sign}(\theta_k-\mu),
	\end{align}
	which is similar to $\ell^1$-penalty with $\mathrm{sign}(\cdot)$ denoting the sign function.  
	
	Here we illustrate how the prior influences network connectivity. We assume that all the synaptic parameters $\bm{\theta}$ follow the same prior distribution in Eq.~\eqref{eq:prior}.  Since the connection with the negative synaptic parameter ($\theta_k <0 $) is pruned, the probability that a connection is pruned is: 
	\begin{equation}\label{eq:sparsity}
		\begin{split}
			\int_{-\infty}^{0}p(\theta_k)\mathrm{d}\theta_k&=\int_{-\infty}^{0}\frac{\alpha}{2}\exp\left( -\alpha|\theta_k-\mu|\right) \mathrm{d}\theta_k\\
			&=
			\begin{cases}
				\frac{1}{2}\exp(-\alpha\mu), & \text{if } \mu>0\\
				1-\frac{1}{2}\exp(\alpha\mu), & \text{if } \mu\leq 0
			\end{cases}.
		\end{split}
	\end{equation}
	As all the synaptic parameters follow the same prior distribution, Eq.~\eqref{eq:sparsity} also represents the network's sparsity. 
	Conversely, given target sparsity $p$ of the SNNs, one can get the appropriate choice of location parameter $\mu$ with Eq.~\eqref{eq:sparsity},
	which is related to the scale parameter $\alpha$
	\begin{equation}\label{eq:params}
		\mu=
		\begin{cases}
			-\frac{1}{\alpha}\ln(2p), & \text{if } p<\frac{1}{2}\\
			\frac{1}{\alpha}\ln(2-2p), & \text{if } p\geq\frac{1}{2}
		\end{cases}.
	\end{equation}
	Note that when $p<0.5$, we have $\mu>0$, which indicates that the peak of the density function $p(\theta_k)$ will locate at positive $\theta_k$. As the sign $s_k$ is chosen randomly and set fixed, the corresponding prior distribution $p(w_k)$ will be bimodal (explained in Appendix~\ref{sec:range}) . In view of that the distribution of weight (bias excluded) in convolutional and FC layers of a directly trained SNN is usually zero-meaned unimodal (See Fig. \ref{fig:param-no-prune}), we assume $p\geq 0.5$ in the following discussion.
	
	\begin{figure}[t]
		\begin{center}
			\includegraphics[width=0.95 \linewidth, trim=0 0 0 30,clip]{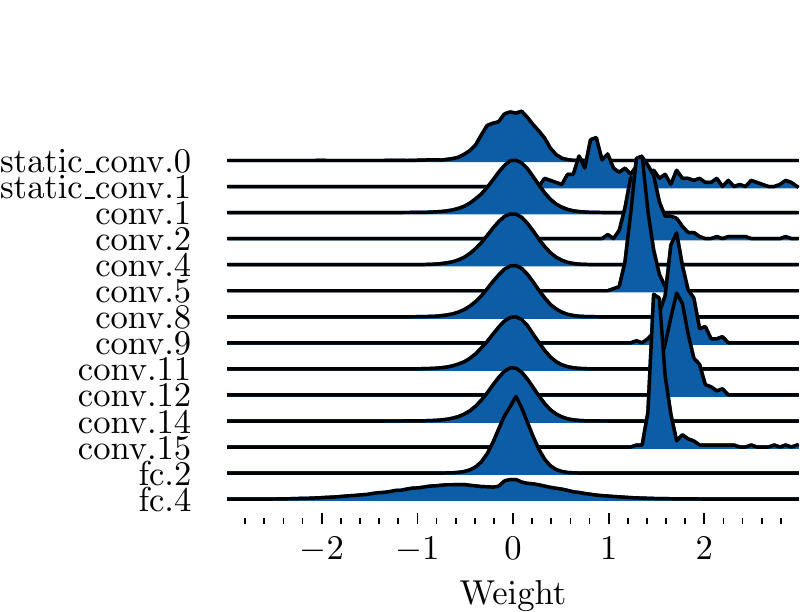}
		\end{center}
		\caption{The distribution of the parameters in a trained SNN without pruning. Note that \textit{static\_conv.1, conv.2, conv.5, conv.9, conv.12} and \textit{conv.15} are all BN layers.}
		\label{fig:param-no-prune}
	\end{figure}
	
	The Grad R algorithm is given in Algorithm \ref{algo:GR}.
	It is important to notice that the target sparsity $p$ is not bound to be reached. Since the gradient of the loss function, i.e., $s_k\frac{\partial \mathcal{L}(x_i,y_i,\bm{\theta})}{\partial w_k}$, also contributes to the posterior distribution (line \ref{alg:update}). Consequently, the choice of these hyperparameters is more a reference rather than a guarantee on the final sparsity. 
	The prior encourages synaptic parameters to gather around $\mu$, wherein the weights are more likely to be zero. Meanwhile, $\alpha$ controls the speed of pruning. The penalty term prevent weights from growing too large, alleviating the concern in Fig.~\ref{fig:grad:b}. 
	
	\begin{algorithm}[t]
		\caption{Gradient Rewiring with SGD}\label{algo:GR}
		\KwIn{Input data $X$, output labels $Y$, learning rate $\eta$, target sparsity $p$, penalty term $\alpha$}
		Initialize weight $\bm{w}$\;
		$\bm{s}\leftarrow\mathrm{sign}(\bm{w}), \bm{\theta}\leftarrow\bm{w}\odot\bm{s}$\;
		Calculate $\mu$ according to $p,\alpha$\;
		\For {t in $[1, N]$}{
			\For {all synaptic parameters $\theta_k$ and sign $s_k\in \bm{s}$}{
				\For {input $x_i\in X$ and label $y_i\in Y$}{
					$\theta_k\leftarrow \theta_k -\eta \left( s_k\frac{\partial \mathcal{L}(x_i,y_i,\bm{\theta})}{\partial w_k}+\alpha\mathrm{sign}(\theta_k-\mu)\right) $\;\label{alg:update}
				}
				$w_k\leftarrow s_k\mathrm{ReLU}(\theta_k)$\;
			}
		}
	\end{algorithm}

	\subsection{Detailed Learning Rule in SNNs}
	
	Here we apply Grad R to the SNN in Fig.~\ref{fig:c10-arch} and derive the detailed learning rule. Specifically, the problem is to compute the gradient of the loss function $s_k\frac{\partial \mathcal{L}(x_i,y_i,\bm{\theta})}{\partial w_k}$ (line \ref{alg:update} in Algorithm \ref{algo:GR}).
	We take a single FC layer without bias parameters as an example. Suppose that $\bm{m}_{t}$ and $\bm{u}_t$ represent the membrane potential of all the LIF neurons after neural dynamics and after the trigger of a spike at time-step $t$, 
	$\bm{S}_{t}$ denotes the vector of the binary spike of all the LIF neurons at time-step $t$,
	$\bm{W}$ denotes the weight matrix of FC layer, $\bm{1}$ denotes a vector composed entirely of 1 with the same size as $\bm{S}_{t}$.
	With these definitions, we can compute the gradient of output spikes $\bm{S}_{t}$ with respect to $\bm{W}$ by unfolding the network over the simulation time-steps, which is similar to the idea of backpropagation through time (BPTT). We obtain
	\begin{equation}\label{eq:fc-lif-grad}
		\begin{split}
			\frac{\partial \bm{S}_{t}}{\partial\bm{W}} &= \frac{\partial\bm{S}_{t}}{\partial\bm{m}_{t}}\frac{\partial \bm{m}_{t}}{\partial\bm{W}},\\
			\frac{\partial\bm{u}_{t}}{\partial\bm{W}} &= \frac{\partial\bm{u}_{t}}{\partial\bm{S}_t}\frac{\partial\bm{S}_{t}}{\partial\bm{W}}+\frac{\partial\bm{u}_{t}}{\partial\bm{m}_t}\frac{\partial\bm{m}_{t}}{\partial\bm{W}},\\
			\frac{\partial\bm{m}_{t+1}}{\partial\bm{W}} &= \frac{\partial\bm{m}_{t+1}}{\partial\bm{u}_{t}}\frac{\partial\bm{u}_{t}}{\partial\bm{W}}  + \frac{1}{\tau_m}\bm{I}_{t}^\top, 
		\end{split}
	\end{equation}
	where 
	\begin{equation}\label{eq:fc-lif-grad-2}
		\begin{split}
			\frac{\partial\bm{u}_{t}}{\partial\bm{S}_t} &= \mathrm{Diag}(u_{\text{rest}}\bm{1}-\bm{m}_{t}),\\
			\frac{\partial\bm{m}_{t+1}}{\partial\bm{u}_{t}} &= 1-\frac{1}{\tau_m},\\
			\frac{\partial\bm{u}_{t}}{\partial\bm{m}_t} &= \frac{\partial\bm{u}_{t}}{\partial\bm{S}_t}\frac{\partial\bm{S}_{t}}{\partial\bm{m}_t} + \mathrm{Diag}(\bm{1}-\bm{S}_{t}).
		\end{split}
	\end{equation}
	When calculating $\frac{\partial\bm{S}_{t}}{\partial\bm{m}_{t}}$, the derivative of spike with respect to the membrane potential after charging, we adopt the surrogate gradient method \cite{10.3389/fnins.2018.00331,8891809}. Specifically, we utilize
	shifted ArcTan function $H_1(x)=\frac{1}{\pi}\arctan(\pi x)+\frac12$ as a surrogate function of the Heaviside step function $H(\cdot)$, and we have
	$\frac{\partial\bm{S}_{t}}{\partial\bm{m}_{t}} \approx H_1'(\bm{m}_{t}-u_{\text{th}})$.
	
	\section{Experiments}
	We evaluate the performance of the Grad R algorithm on image recognition benchmarks, namely the MNIST and CIFAR-10 datasets. Our implementation of deep SNN are based on our open-source SNN framework \textit{SpikingJelly} \cite{SpikingJelly}.

	\subsection{Experimental Settings}
	For MNIST, we consider a shallow fully-connected network (Input-Flatten-800-LIF-10-LIF) similar to previous works. On the CIFAR-10 dataset, we use a convolutional SNN with 6 convolutional layers and 2 FC layers (Input-[[256C3S1-BN-LIF]$\times$3-MaxPool2d2S2]$\times$2-Flatten-Dropout-2048-LIF-100-LIF-AvgPool1d10S10, illustrated in Fig.~\ref{fig:c10-arch}) proposed in our previous work \cite{fang2020incorporating}. The first convolutional layer acts as a learnable spike encoder to transform the input static images to spikes. Through this way, we elude from the Poisson encoder that requires a long simulation time-step to restore inputs accurately \cite{Wu_Deng_Li_Zhu_Xie_Shi_2019}. The dropout mask is initiated at the first time-step and remains constant within an iteration of mini-batch data, guaranteeing a fixed connective pattern in each iteration \cite{10.3389/fnins.2020.00119}. All bias parameters are omitted except for BatchNorm (BN) layers. Note that pruning weights in BN layers is equivalent to pruning all the corresponding input channels, which will lead to an unnecessary decline in the expressiveness of SNNs. Moreover, the peak of the probability density function of weights in BN layers are far from zero (see Fig.~\ref{fig:param-no-prune}), which suggest the pruning is improper. Based on the above reasons, we exclude all BN layers from pruning in our algorithm.
	
	To facilitate the learning process, we use the Adam optimizer along with our algorithm for joint optimization of connectivity and weight. Furthermore, the gradients of reset $\frac{\partial u_t}{\partial S_t}$ in LIF neurons' forward steps (Eq.~\ref{eq:neuron-discrete}) are dropped, i.e. manually set to zero, according to \cite{Zenke2020.06.29.176925}. The choice of all hyperparameters is shown in Table \ref{tab:hyper-1} and Table \ref{tab:hyper-2}.
	
	\begin{figure}[t]
		\begin{center}
			\includegraphics[width=0.95\linewidth]{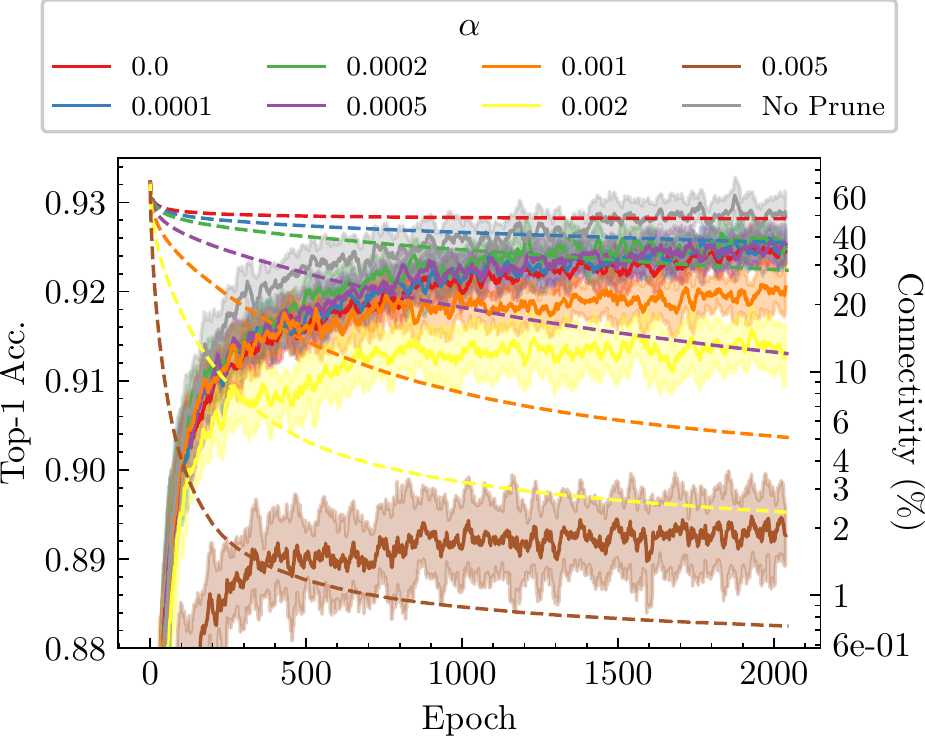}
		\end{center}
		\caption{Top-1 accuracy (solid line) on the test set and connectivity (dashed line) during training on CIFAR-10. $\alpha$ is the penalty term in Algorithm 1.}
		\label{fig:acc-connectivity-t}
	\end{figure}

	\subsection{Performance}
	
	\begin{table*}[h]
		\centering
		\scalebox{0.95}
		{
			\begin{threeparttable}
				\setlength{\tabcolsep}{2mm}{
					\begin{tabular}{cccrrr}
						\toprule
						
						\textbf{Pruning Methods} & \textbf{Training Methods} & \textbf{Arch.} & \textbf{Acc. (\%)} & \textbf{Acc. Loss (\%)} & \textbf{Connectivity (\%)} \\
						\midrule
						Threshold-based \cite{10.3389/fnins.2016.00241} & Event-driven CD & 2 FC & 95.60 & -0.60 & 26.00 \\
						Deep R \cite{NIPS2018_7359}\tnote{1} & Surrogate Gradient & LSNN\tnote{2} & 93.70 & +2.70 & 12.00 \\
						Threshold-based \cite{10.3389/fnins.2019.00405} & STDP & 1 FC & 94.05 & $\approx$-19.05\tnote{3} & $\approx$30.00 \\
						Threshold-based \cite{8325325} & STDP & 2 layers\tnote{4} & 93.20 & -1.70 & 8.00 \\
						\hline
						\multirow{3}*{ADMM-based \cite{deng2019comprehensive}} & \multirow{3}*{Surrogate Gradient} & \multirow{3}*{LeNet-5 like} & \multirow{3}*{99.07} & +0.03 & 50.00 \\
						&  &  &  & -0.43 & 40.00 \\
						&  &  &  & -2.23 & 25.00 \\
						\hline
						Online APTN \cite{10.3389/fnins.2020.598876} & STDP & 2 FC & 90.40 & -3.87 & 10.00 \\
						\hline
						\multirow{4}*{Deep R\tnote{5}} & \multirow{4}*{Surrogate Gradient} & \multirow{4}*{2 FC} & \multirow{4}*{98.92} & -0.36 & 37.14 \\
						&  &  &  & -0.56 & 13.30 \\
						&  &  &  & -2.47 & 4.18 \\
						&  &  &  & -9.67 & 1.10 \\
						\hline
						\multirow{5}*{\textbf{Grad R}} & \multirow{5}*{Surrogate Gradient} & \multirow{5}*{2 FC}  & \multirow{5}*{98.92} & \textbf{-0.33} & \textbf{25.71} \\
						&  &  &  & \textbf{-0.43} & \textbf{17.94} \\
						&  &  &  & \textbf{-2.02} & \textbf{5.63} \\
						&  &  &  & \textbf{-3.55} & \textbf{3.06} \\
						&  &  &  & \textbf{-8.08} & \textbf{1.38} \\
						\bottomrule	
				\end{tabular}}
				\begin{tablenotes}
					\footnotesize
					\item[1] Using sequential MNIST.
					\item[2] 100 adaptive and 120 regular neurons.
					\item[3] A data point of conventional pruning. Since the soft-pruning will only freeze weight during training, bringing no real sparsity.
					\item[4] The 1st layer is FC and the 2nd layer is a one-to-one layer with lateral inhibition.
					\item[5] Our implementation of Deep R.
				\end{tablenotes}
			\end{threeparttable}
		}
		
		\caption{Performance comparison between the proposed method and previous work on MNIST dataset.}
		
		\label{tab:acc-compare-MNIST}
	\end{table*}
	
	\begin{table*}[h]
		\centering
		\scalebox{0.95}
		{
			\begin{threeparttable}
				\setlength{\tabcolsep}{2.5mm}{
					\begin{tabular}{cccrrr}
						\toprule
						
						\textbf{Pruning Methods} & \textbf{Training Methods} & \textbf{Arch.} & \textbf{Acc. (\%)} & \textbf{Acc. Loss (\%)} & \textbf{Connectivity (\%)} \\
						\midrule
						\multirow{3}*{ADMM-based \cite{deng2019comprehensive}} & \multirow{3}*{Surrogate Gradient} & \multirow{3}*{7 Conv, 2 FC} & \multirow{3}*{89.53} & -0.38 & 50.00 \\
						&  &  &  & -2.16 & 25.00 \\
						&  &  &  & -3.85 & 10.00 \\
						\hline
						\multirow{3}*{Deep R\tnote{1}}& \multirow{3}*{Surrogate Gradient} & \multirow{3}*{6 Conv, 2 FC} & \multirow{3}*{92.84} & -1.98 & 5.24 \\
						&  &  &  & -2.56 & 1.95 \\
						&  &  &  & -3.53 & 1.04 \\
						\hline
						\multirow{5}*{\textbf{Grad R}} & \multirow{5}*{Surrogate Gradient} & \multirow{5}*{6 Conv, 2 FC}  & \multirow{5}*{92.84} & \textbf{-0.30} & \textbf{28.41} \\
						&  &  &  & \textbf{-0.34} & \textbf{12.04} \\
						&  &  &  & \textbf{-0.81} & \textbf{5.08} \\
						&  &  &  & \textbf{-1.47} & \textbf{2.35} \\
						&  &  &  & \textbf{-3.52} & \textbf{0.73} \\
						\bottomrule	
				\end{tabular}}
				\begin{tablenotes}
					\footnotesize
					\item[1] Experiment on CIFAR-10 is not involved in SNNs' pruning work of Deep R~\cite{NIPS2018_7359}. Therefore, we implement and generalize Deep R to deep SNNs.
				\end{tablenotes}
			\end{threeparttable}
		}
		\caption{Performance comparison between the proposed method and previous work on CIFAR-10 dataset.}
		
		\label{tab:acc-compare-c10}
	\end{table*}
	
	Fig.~\ref{fig:acc-connectivity-t} shows the top-1 accuracy on the test set and connectivity during training on the CIFAR-10 datasets.
	The improvement of accuracy and network sparsity is simultaneous as the Grad R algorithm can learn the connectivity and weights of the SNN jointly.
	As the training goes into later stages, the growth of sparsity has little impact on the performance. The sparsity will likely continue to increase, and the accuracy will still maintain, even if we increase the training epochs. The effect of prior is also clearly shown in Fig.~\ref{fig:acc-connectivity-t}. Overall, a larger penalty term $\alpha$ will bring higher sparsity and more apparent performance degradation. Aggressive pruning setting like huge $\alpha$, e.g. $\alpha=0.005$, will soon prune most of the connections and let the network leaves few parameters to be updated (see the brown line).
	
	As most of the previous works do not conduct experiments on datasets containing complicated patterns, such as CIFAR-10, we also test the algorithm on a similar shallow fully-connected network for the classification task of the MNIST dataset. We further implement Deep R as a baseline, which also can be viewed as a pruning-while-learning algorithm. The highest performance on the test set against connectivity is presented in Fig.~\ref{fig:res}. The performance of Grad R is comparable to Deep R on MNIST and much better than Deep R on CIFAR-10. As illustrated in Fig.~\ref{fig:res-c10}, the performance of Grad R is close to the performance of the fully connected network when the connectivity is above 10\%. Besides, it reaches a classification accuracy of 92.03\% (merely \textless1\% accuracy loss) when constrained to $\sim$5\% connectivity. Further, Grad R achieves a classification accuracy of 89.32\% ($\sim$3.5\% accuracy loss) under unprecedented 0.73\% connectivity. These results suggest that there exists extremely high redundancy in deep SNNs. We compare our algorithm to other state-of-the-art pruning methods of SNNs, and the results are listed in Table \ref{tab:acc-compare-MNIST} and Table \ref{tab:acc-compare-c10}, from which we can find that the proposed method outperforms previous works.

	\begin{figure}[t]
		\begin{center}
			\subfigure[MNIST]{
				\label{fig:res-mnist} 				
				\includegraphics[width=0.47\linewidth]{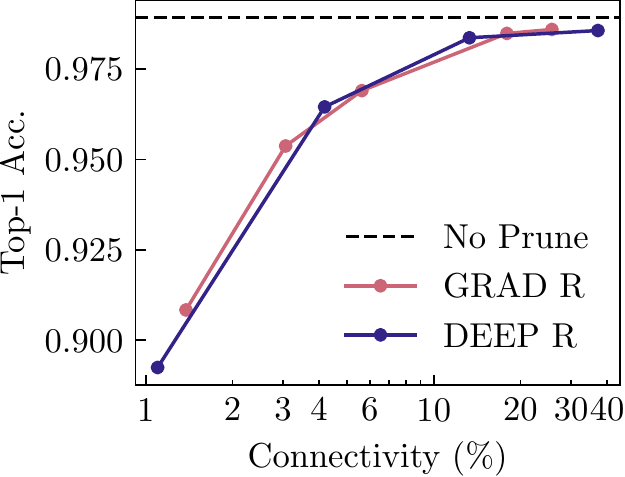}
			}
			\subfigure[CIFAR-10]{
				\label{fig:res-c10} 			    
				\includegraphics[width=0.47\linewidth]{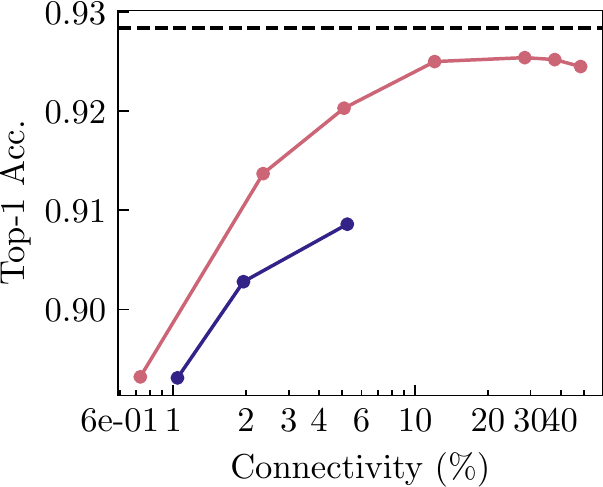}
			}
		\end{center}
		\caption{Top-1 accuracies on test set. Different penalty term $\alpha$ is chosen for different sparsity (connectivity). The accuracy of unpruned model is shown as a black dashed line.}
		\label{fig:res}
	\end{figure}

	\begin{figure}[t]
		\begin{center}
			\subfigure[Distribution of $\theta$]{
				\label{fig:ridge-theta} 
				\includegraphics[width=0.47\linewidth]{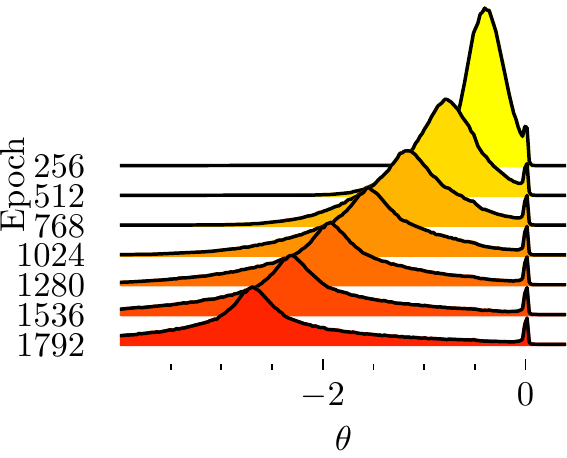}
			}
			\subfigure[Number of pruned and regrown connections]{
				\label{fig:prune-grow} 
				\includegraphics[width=0.47\linewidth]{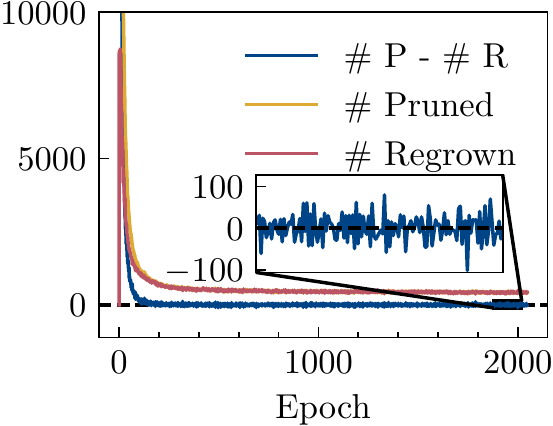}
			}
		\end{center}
		\caption{The statistics of the 2nd convolutional layer during training.}
		\label{fig:2nd-conv}
	\end{figure}	
	
	\begin{figure}[tb]
		\begin{center}
			\includegraphics[width=0.95\linewidth, trim=0 0 0 20,clip]{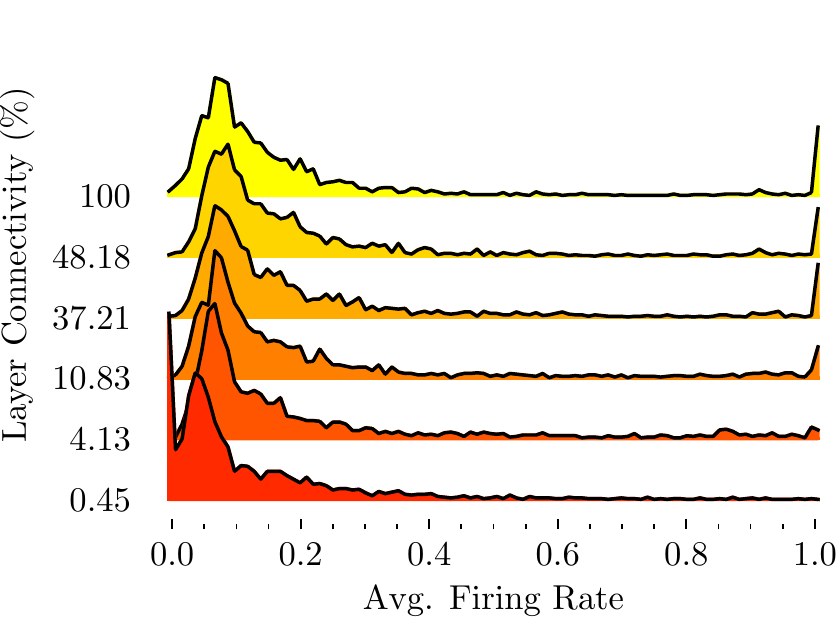}
		\end{center}
		\caption{The distribution of the average firing rate of post-synaptic neurons in the 1st FC layer under different connectivity. The firing rate of each neuron is averaged on samples of the whole test dataset and on time domain.}
		\label{fig:firing-rate-dist}
	\end{figure}

	\subsection{Behaviors of Algorithm}
	\noindent
	\paragraph{Ablation study of prior.}
	We test the effects of prior. The Grad R algorithm without prior ($\alpha=0$) achieves almost the highest accuracy after pruning, but the connectivity drops the most slowly (the red curve in Fig.~\ref{fig:acc-connectivity-t}).
	The performance gap between unpruned SNN and pruned SNN without prior suggests the minimum accuracy loss using Grad R. It also clarifies that prior is needed if higher sparsity is required.
	
	\noindent
	\paragraph{Distribution drift of parameters.}Fig.~\ref{fig:ridge-theta} illustrates the distribution of the parameter $\theta$ during training ($\alpha=0.005$). Since we use a Laplacian distribution with $\mu<0$ as prior, the posterior distribution will be dragged to fit the prior.
	
	\noindent
	\paragraph{Competition between pruning and regrowth.}
	A cursory glance of the increasing sparsity over time may mislead one that the pruning process always gains huge ascendancy over the regrowth process. However, Fig.~\ref{fig:prune-grow} sheds light on the fierce competition between these two processes. The pruning takes full advantage of the regrowth merely in the early stages (the first 200 epochs). Afterward, the pruning process only wins by a narrow margin. The competition encourages SNNs to explore different network structures during training.
	
	\noindent
	\paragraph{Firing rate and connectivity.}
	Reduction of firing rate is essential for energy-efficient neuromorphic chips (event-driven hardware) \cite{4252769}. Therefore, we explore the impact of synaptic pruning on the firing rate. As shown in Fig.~\ref{fig:firing-rate-dist},
	there is a relatively weak correlation between the distribution of neurons' average firing rate and connectivity. 
	However, when the connectivity drastically drops extremely low under 1\%, the level of neural activities is severely suppressed. The peak on the left side has a significant shift, causing by silent neurons. Meanwhile, the right peak representing saturated neurons vanishes.

	\section{Conclusion}
	In this paper, we present Gradient Rewiring, a pruning method for deep SNNs, enabling efficient joint learning of connectivity and network weight. It utilizes the competition between pruning and regrowth to strengthen the exploration of network structures and achieves the minimal loss of SNN's performance on the MNIST and CIFAR-10 dataset so far.  Our work reveals that there exists high redundancy in deep SNNs.
	An interesting future direction is to further investigate and exploit such redundancy to achieve low-power computation on neuromorphic hardware.	
	
	\appendix
	\section{Supporting Lemma}\label{sec:lemma}
	\begin{lemma}
		\label{lemma:1}
		If there exists a constant $C$, such that $\forall k, \theta_k\geq C$, then there exists a constant $A>0$ such that
		\begin{equation}
			\|\nabla_{\bm{\theta}}\bar{\mathcal{L}}\|^2\leq A\langle\nabla_{\bm{\theta}}\mathcal{L},\nabla_{\bm{\theta}}\bar{\mathcal{L}}\rangle.
		\end{equation}
	\end{lemma}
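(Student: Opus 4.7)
The plan is to evaluate the two gradients componentwise using the SoftPlus relation $w_k = \tfrac{s_k}{\beta}\log(1+\exp(\beta\theta_k))$ and then to notice that $\nabla_{\bm{\theta}}\bar{\mathcal{L}}$ and $\nabla_{\bm{\theta}}\mathcal{L}$ differ only by a strictly positive, componentwise scaling factor whose infimum on $\{\theta_k \geq C\}$ gives the constant $A$.

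Concretely, I would first compute $\partial w_k/\partial\theta_k = s_k\,\sigma(\beta\theta_k)$ with $\sigma$ the logistic sigmoid, so that the chain rule yields
\begin{equation*}
\frac{\partial\mathcal{L}}{\partial\theta_k} \;=\; s_k\,\sigma(\beta\theta_k)\,\frac{\partial\mathcal{L}}{\partial w_k},
\qquad
\frac{\partial\bar{\mathcal{L}}}{\partial\theta_k} \;=\; s_k\,\frac{\partial\mathcal{L}}{\partial w_k},
\end{equation*}
directly from the definition of $\bar{\mathcal{L}}$ given in Eq.~(4) (with ReLU replaced by SoftPlus for consistency with the theorem's hypotheses). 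Setting $g_k := s_k\,\partial\mathcal{L}/\partial w_k$, these become $\partial\mathcal{L}/\partial\theta_k = \sigma(\beta\theta_k)\,g_k$ and $\partial\bar{\mathcal{L}}/\partial\theta_k = g_k$, so
\begin{equation*}
\|\nabla_{\bm{\theta}}\bar{\mathcal{L}}\|^2 \;=\; \sum_k g_k^2,
\qquad
\langle\nabla_{\bm{\theta}}\mathcal{L},\nabla_{\bm{\theta}}\bar{\mathcal{L}}\rangle \;=\; \sum_k \sigma(\beta\theta_k)\,g_k^2.
\end{equation*}

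The second step is to use the uniform lower bound $\theta_k\geq C$. Since $\sigma$ is monotone increasing and strictly positive, $\sigma(\beta\theta_k)\geq \sigma(\beta C) > 0$ for every $k$. Choosing $A := 1/\sigma(\beta C)$ gives $\sigma(\beta\theta_k)\geq 1/A$ uniformly in $k$, and therefore
\begin{equation*}
\langle\nabla_{\bm{\theta}}\mathcal{L},\nabla_{\bm{\theta}}\bar{\mathcal{L}}\rangle \;=\; \sum_k \sigma(\beta\theta_k)\,g_k^2 \;\geq\; \frac{1}{A}\sum_k g_k^2 \;=\; \frac{1}{A}\|\nabla_{\bm{\theta}}\bar{\mathcal{L}}\|^2,
\end{equation*}
which is the claimed inequality. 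Incidentally, this also makes the inner product nonnegative, which is the heuristic justification mentioned right before the theorem statement (the angle between the true and redefined gradients is at most $90^\circ$).

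There is essentially no obstacle here: once the SoftPlus derivative is written out, the lemma reduces to the fact that a weighted sum of squares with uniformly positive weights dominates, up to a constant, the unweighted sum. The only subtle point is to make sure that the lower bound hypothesis $\theta_k \geq C$ really does yield a uniform (i.e., $k$-independent) lower bound on $\sigma(\beta\theta_k)$, which is immediate from monotonicity of $\sigma$; if the paper had used ReLU rather than SoftPlus, the derivative would degenerate at $\theta_k < 0$ and no such uniform $A$ could exist, which is precisely why the theorem statement substitutes SoftPlus in.
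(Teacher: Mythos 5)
Your proposal is correct and follows essentially the same route as the paper: both compute $\partial w_k/\partial\theta_k = s_k\sigma(\beta\theta_k)$ from the SoftPlus mapping, use $s_k^2=1$, and bound $\sigma(\beta\theta_k)\geq\sigma(\beta C)$ by monotonicity to obtain $A=1/\sigma(\beta C)$. Your version of writing the inequality directly (rather than bounding the ratio as the paper does) even sidesteps the degenerate case where the inner product vanishes, but the substance is identical.
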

	\begin{proof}
		Since $w_k=\frac{s_k}{\beta}\log(1+\exp(\beta \theta_k))$, we have $\frac{\partial w_k}{\partial \theta_k}=s_k\sigma(\beta\theta_k)$, where $\sigma(x)=\frac{1}{1+\exp(-x)}$ is sigmoid function.
		
		Note that $\sigma(x)>0$ is monotonically increasing, we have
		\begin{equation}
			\begin{split}
				\frac{\|\nabla_{\bm{\theta}}\bar{\mathcal{L}}\|^2}{\langle\nabla_{\bm{\theta}}\mathcal{L},\nabla_{\bm{\theta}}\bar{\mathcal{L}}\rangle}&=\frac{\sum_k\left( s_k\frac{\partial\mathcal{L}}{\partial w_k}\right)^2}{\sum_k\left[ s_k^2\sigma(\beta\theta_k)\left( \frac{\partial\mathcal{L}}{\partial w_k}\right)^2\right]}\\
				&=\frac{\sum_k\left( \frac{\partial\mathcal{L}}{\partial w_k}\right)^2}{\sum_k\left[ \sigma(\beta\theta_k)\left( \frac{\partial\mathcal{L}}{\partial w_k}\right)^2\right]}\\
				&\leq\frac{\sum_k\left( \frac{\partial\mathcal{L}}{\partial w_k}\right)^2}{\sum_k\left[ \sigma(\beta C)\left( \frac{\partial\mathcal{L}}{\partial w_k}\right)^2\right]}\\
				&=\frac{1}{\sigma(\beta C)}.
			\end{split}
		\end{equation}
		Thus there exists a constant $A=\frac{1}{\sigma(\beta C)}$ such that
		$\|\nabla_{\bm{\theta}}\bar{\mathcal{L}}\|^2\leq A\langle\nabla_{\bm{\theta}}\mathcal{L},\nabla_{\bm{\theta}}\bar{\mathcal{L}}\rangle$.\\
	\end{proof}	
	Lemma \ref{lemma:1} also suggests that $\langle \nabla_{\bm{\theta}}\mathcal{L}, \nabla_{\bm{\theta}}\bar{\mathcal{L}} \rangle \geq 0$.
	When $\|\nabla_{\bm{\theta}}\bar{\mathcal{L}}(\bm{\theta}^t)\|>0$, the inner product $\langle \nabla_{\bm{\theta}}\mathcal{L}, \nabla_{\bm{\theta}}\bar{\mathcal{L}} \rangle$ is strictly positive, thus the angle between the accurate
	gradient $\nabla_{\bm{\theta}}\mathcal{L}$ and the redefined gradient $\nabla_{\bm{\theta}}\bar{\mathcal{L}}$ is below 90\textdegree.	
	
	\begin{table}[ht]
		\centering
		\scalebox{0.95}
		{
			\begin{tabular}{cccc}
				\toprule
				Parameters & Descriptions & CIFAR-10 & MNIST \\
				\midrule
				$N$ & \# Epoch & 2048 & 512\\
				Batch Size & - & 16 & 128\\
				$T$ & \# Timestep & \multicolumn{2}{c}{8} \\
				$\tau_m$ & Membrane Constant & \multicolumn{2}{c}{2.0} \\
				$u_{\text{th}}$ & Firing Threshold & \multicolumn{2}{c}{1.0} \\
				$u_{\text{rest}}$ & Resting Potential & \multicolumn{2}{c}{0.0} \\
				$p$ & Target Sparsity & \multicolumn{2}{c}{95\%} \\
				$\eta$ & Learning Rate & \multicolumn{2}{c}{1e-4}\\
				$\beta_1,\beta_2$ & Adam Decay & \multicolumn{2}{c}{0.9, 0.999} \\
				\bottomrule
			\end{tabular}
		}
		\caption{Choice of hyperparameters for both pruning algorithm.}
		\label{tab:hyper-1}
	\end{table}
	
	\begin{table}[h]
		\centering
		\begin{tabular}{cccc}
			\toprule
			Dataset & Algorithm & Penalty & Connectivity (\%) \\
			\midrule
			\multirow{10}*{CIFAR-10} & \multirow{7}*{Grad R} & $0$ & 48.35 \\
			& & $10^{-4}$ & 37.79 \\
			& & $2\times 10^{-4}$ & 28.41 \\
			& & $5\times 10^{-4}$ & 12.04 \\
			& & $10^{-3}$ & 5.08 \\
			& & $2\times 10^{-3}$ & 2.35 \\
			& & $5\times 10^{-3}$ & 0.73 \\
			\cmidrule{2-4}
			& \multirow{3}*{Deep R} & $0$ & 5.24\\
			& & $5\times 10^{-4}$ & 1.95 \\
			& & $10^{-3}$ & 1.04 \\
			\midrule
			\multirow{9}*{MNIST} & \multirow{5}*{Grad R} & $5\times 10^{-3}$ & 25.71\\
			& & $10^{-2}$ & 17.94\\
			& & $5\times 10^{-2}$ & 5.63\\
			& & $10^{-1}$ & 3.06\\
			& & $2\times 10^{-1}$ & 1.38\\
			\cmidrule{2-4}
			& \multirow{4}*{Deep R} & $0$ & 37.14\\
			& & $10^{-2}$ & 13.30\\
			& & $5\times 10^{-2}$ & 4.18\\
			& & $2\times 10^{-1}$ & 1.10\\

			\bottomrule
		\end{tabular}
		\caption{Choice of penalty term for Grad R (target sparsity $p=0.95$) and Deep R (maximum sparsity $p=1$ for CIFAR-10, $p=0.99$ for MNIST) with corresponding final connectivity.}
		\label{tab:hyper-2}
	\end{table}
	
	\section{Detailed Setting of Hyperparameters}\label{sec:hyper}
	
	The hyperparameters of Grad R and our implementation of Deep R are illustrated in Table \ref{tab:hyper-1} and Table \ref{tab:hyper-2} for reproducing results in this paper. Note that for Deep R, zero penalty still introduces low connectivity.

	\section{Range of Target Sparsity}\label{sec:range}
	Recall that the network weights $\bm{w}$ are determined by corresponding signs $\bm{s}$ and synaptic parameters $\bm{\theta}$, if the signs are randomly assigned and the distribution of $\theta_k$ is given, we are able to deduce the distribution of $w_k$. For this algorithm, we denote the cumulative distribution function (CDF) of $\theta_k$ is $F_{\theta_k}(\theta) := P[\theta_k<\theta]$. So we obtain 
	\begin{equation}\label{eq:after-relu}
		\begin{split}
			P[\mathrm{ReLU}(\theta_k)<\theta]&=P[\theta>\max\{\theta_k,0\}]\\
			&=
			\begin{cases}
				0, & \text{if } \theta \leq 0\\
				F_{\theta_k}(\theta), & \text{if } \theta > 0
			\end{cases}.
		\end{split}
	\end{equation}
	Since the distribution of sign is similar to the Bernoulli distribution, the CDF of $w_k$ can also be derived as 
	\begin{equation}\label{eq:weight}
		\begin{split}
			F_{w_k}(w) &:= P[w_k<w]\\
			&=P[s_k\mathrm{ReLU}(\theta_k)<w]\\
			&=\sum_{s\in\{1,-1\}}P[s_k\mathrm{ReLU}(\theta_k)<w\mid s_k=s]P[s_k=s]\\
			&=\frac12\left( P[\mathrm{ReLU}(\theta_k)<w]+P[-\mathrm{ReLU}(\theta_k)<w]\right)\\
			&=\frac12( P[\mathrm{ReLU}(\theta_k)<w]+1-P[\mathrm{ReLU}(\theta_k)<-w]\\
			&\quad -P[\mathrm{ReLU}(\theta_k)=-w])\\
			&=
			\begin{cases}
				\frac12(1-F_{\theta_k}(-w)), & \text{if } w < 0\\
				\frac12(1-P(\theta_k\leq 0)), & \text{if } w = 0\\
				\frac12(1+F_{\theta_k}(w)), & \text{if } w > 0\\
			\end{cases}.
		\end{split}
	\end{equation}
	Note that $F_{w_k}'(w)=\frac12F_{\theta_k}'(|w|)$, which indicates the probability density function (PDF) of $w_k$ is an even function. Therefore, the distribution of $w$ is symmetric with respect to the $w$-axis. We hope it to be unimodal, which matches our observation of weight distribution in SNNs (See Fig.~\ref{fig:param-no-prune}). So there should not be any peak in $F_{w_k}'(w)$ when $w>0$, otherwise we will get the other peak symmetric to it.
	
	Note that the requirement that $F_{w_k}'(w)=\frac12F_{\theta_k}'(|w|) (w>0)$ has no peak is equivalent to that $F_{\theta_k}'(w) (w>0)$ has no peak. As $F_{\theta_k}'(\cdot)$ is the prior of $\theta_k$, which is a Laplacian distribution where the PDF has only one peak at $\mu$, we must constrain $\mu\leq 0$. This explains why we require target sparsity $p\geq 0.5$.
	
	\bibliographystyle{named}
	\bibliography{ijcai21}

\end{document}